\documentclass[sn-basic]{sn-jnl}

\usepackage{graphicx}%
\usepackage{multirow}%
\usepackage{amsmath,amssymb,amsfonts}%
\usepackage{amsthm}%
\usepackage{mathrsfs}%
\usepackage[title]{appendix}%
\usepackage{xcolor}%
\usepackage{textcomp}%
\usepackage{manyfoot}%
\usepackage{booktabs}%
\usepackage{algorithm}%
\usepackage{algorithmicx}%
\usepackage{algpseudocode}%
\usepackage{listings}%

\usepackage{geometry}
\usepackage{mathtools}

\usepackage{array}
\usepackage[caption=false,font=normalsize,labelfont=sf,textfont=sf]{subfig}
\usepackage{textcomp}
\usepackage{stfloats}
\usepackage{url}
\usepackage{verbatim}

\usepackage{subcaption}
\usepackage{balance}

\theoremstyle{thmstyleone}%
\newtheorem{theorem}{Theorem}

\theoremstyle{thmstyletwo}%

\theoremstyle{thmstylethree}%

\raggedbottom

\usepackage{comment}

\begin{document}

\title[GB-DQN: Gradient Boosted DQN Models for Non-stationary Reinforcement Learning]{GB-DQN: Gradient Boosted DQN Models for Non-stationary Reinforcement Learning}

\author[1]{\fnm{Chang-Hwan} \sur{Lee}}\email{changhwanlee@fau.edu}

\author[2]{\fnm{Chanseung} \sur{Lee}}\email{chanseunglee0@gmail.com}

\affil[1]{\orgdiv{Department of Electrical Engineering and Computer Science}, \orgname{Florida Atlantic University}, \orgaddress{\street{777 Glades Rd}, \city{Boca Raton}, \postcode{33434}, \state{FL}, \country{USA}}}

\affil[2]{\orgname{Morrow Company}, \orgaddress{\street{3218 Pringle Road SE}, \city{Salem}, \postcode{97302}, \state{OR}, \country{USA}}}

\abstract{
Non-stationary environments pose a fundamental challenge for deep reinforcement learning, as changes in dynamics or rewards invalidate learned value functions and cause catastrophic forgetting. 
We propose \emph{Gradient-Boosted Deep Q-Networks (GB-DQN)}, an adaptive ensemble method that addresses model drift through incremental residual learning. 
Instead of retraining a single Q-network, GB-DQN constructs an additive ensemble in which each new learner is trained to approximate the Bellman residual of the current ensemble after drift. 
We provide theoretical results showing that each boosting step reduces the empirical Bellman residual and that the ensemble converges to the post-drift optimal value function under standard assumptions. 
Experiments across a diverse set of control tasks with controlled dynamics changes demonstrate faster recovery, improved stability, and greater robustness compared to DQN and common non-stationary baselines.
}

\keywords{Reinforcement learning, DQN, Gradient Boosting, Non-stationary reinforcement learning}

\maketitle

\clearpage

\section{Introduction}

Deep reinforcement learning (RL) has achieved remarkable success across a wide range
of sequential decision-making and control problems by combining value-based methods
with expressive function approximators. In particular, Deep Q-Networks (DQN) have
become a standard baseline for learning effective policies directly from high-dimensional
observations. However, most deep RL algorithms rely on a critical and often unrealistic
assumption: the environment remains stationary throughout training and deployment.
In many real-world settings, this assumption fails to hold. Changes in physical
parameters, system dynamics, sensor characteristics, or task objectives induce
\emph{model drift}, under which the transition kernel or reward function of the
underlying Markov decision process (MDP) evolves over time. When such drift occurs,
the Bellman operator itself changes, rendering value functions learned under earlier
dynamics systematically biased and often leading to abrupt performance degradation,
slow recovery, or catastrophic forgetting.

Existing approaches to non-stationary reinforcement learning typically address this
problem through retraining, hard resets, or recency-biased learning. Standard DQN
updates overwrite previously learned representations as new data arrives, causing
destructive interference when regimes change. Reset-based methods mitigate this
interference by periodically reinitializing network parameters, but they discard all
accumulated knowledge and are highly sample-inefficient. Sliding-window replay and
recency-based sampling emphasize recent experience to track changing dynamics, yet
they inevitably forget earlier regimes and struggle when drift is gradual or recurrent.
Ensemble-based methods reduce variance by averaging multiple value functions, but
when all ensemble members are trained on the same non-stationary target, they drift
together and fail to correct systematic bias induced by environmental changes.

In this paper, we propose \emph{Gradient-Boosted Deep Q-Networks (GB-DQN)}, a
principled ensemble framework that addresses non-stationarity by reframing
environment drift as a sequence of Bellman residual correction problems. Rather than
retraining or resetting a monolithic Q-network, GB-DQN constructs an additive ensemble
of value-function approximators. Each new learner is trained to approximate the
Bellman residual of the current ensemble after a change in dynamics, while previously
learned components are frozen and preserved. This design directly mitigates
catastrophic forgetting and enables targeted adaptation, since new learners correct
only the discrepancies introduced by drift rather than relearning the value function
from scratch.

The proposed formulation is inspired by gradient boosting in supervised learning,
which builds strong predictors by sequentially fitting weak learners to residual errors.
Applied to reinforcement learning, this perspective provides a unifying interpretation
of value-function adaptation as functional gradient descent on the Bellman error.
We show theoretically that each boosting step strictly reduces the empirical Bellman
residual whenever the new learner is not orthogonal to the residual signal. Under
standard contraction assumptions and a sufficiently expressive hypothesis class,
repeated residual fitting guarantees convergence to the optimal value function in
stationary environments. Moreover, after a change in dynamics, successive residual
corrections converge to the optimal value function of the post-drift MDP, establishing
both stability and adaptability of the boosted ensemble.

To support these claims empirically, we evaluate GB-DQN on a suite of benchmark
control tasks under controlled non-stationary dynamics. Across environments with
varying reward structure and complexity, including sparse-reward and contact-rich
domains, GB-DQN consistently exhibits faster post-drift recovery, smaller performance
drops at regime changes, and lower variance compared to standard DQN, static
ensembles, reset-based methods, and sliding-window replay. These results demonstrate
that residual-based ensemble augmentation yields systematic improvements that cannot
be explained by variance reduction or recency bias alone.

Overall, this work makes three key contributions. First, it introduces a new
perspective on non-stationary reinforcement learning that treats model drift as a
residual learning problem over evolving Bellman operators. Second, it proposes
GB-DQN, an adaptive and computationally efficient ensemble method that preserves
prior knowledge while enabling rapid adaptation. Third, it provides both theoretical
guarantees and empirical evidence demonstrating that gradient-boosted value-function
ensembles offer a robust and scalable foundation for reinforcement learning in dynamic,
non-stationary environments.

\section{Related Work}
In many real-world reinforcement learning (RL) applications, the assumption of a
stationary environment does not hold. 
Changes in system dynamics, external
conditions, or task objectives induce \emph{model drift} (or \emph{concept drift}),
causing previously learned value functions to become suboptimal. 
Such non-stationarity poses a major challenge for value-based methods such as DQN,
which rely on fixed transition and reward functions.

Classic approaches detect change points in observed state transitions or reward distributions using statistical tests (e.g., CUSUM, Page–Hinckley) and trigger policy retraining (\cite{page:54, gama:14}). 
In the context of RL, methods like Contextual MDPs maintain separate models for different environment regimes, switching policies upon detecting latent context shifts (\cite{hallak:15}). 
More recent meta‐learning solutions aim to quickly adapt a single policy to new tasks via few‐shot gradient updates (\cite{finn:17}). 
While effective, these methods often require extensive offline computation or pre‐collected context labels, whereas our approach dynamically grows an ensemble online with minimal overhead.

Early work on ensemble‐based RL demonstrated that maintaining a collection of diverse Q‐networks can reduce estimation variance and improve policy stability. 
Dietterich first formalized the notion of ensemble learning in the context of Q‐learning, showing that policy aggregation across multiple learners mitigates overfitting to spurious value estimates (\cite{dietterich:00}). 
More recent advances such as Bootstrapped ensemble DQN (\cite{osband:16}) maintain multiple value function heads trained on different bootstrap samples of the replay buffer, enabling deep exploration through approximate posterior sampling. 
Despite these benefits, static ensembles lack mechanisms to incorporate new information when all base learners have become outdated, motivating our focus on adaptive ensemble augmentation.

Averaged-DQN (\cite{anschel:17}) further formalizes ensemble learning for DQN by averaging predictions from multiple independently trained networks, showing that ensembling reduces variance and stabilizes learning dynamics. 
While effective under stationary settings, ensemble methods alone struggle when environment dynamics shift, since older ensemble members may encode outdated transition models.

Reset-based methods have been proposed to mitigate primacy bias, a phenomenon where early training data disproportionately influences long-term policy behavior. 
\cite{nikishin2022primacy} demonstrate that periodically resetting network parameters while retaining the replay buffer significantly improves performance and stability, especially in long-horizon or non-stationary environments. 
This approach, often referred to as Reset-DQN, enables rapid re-optimization under changing conditions but sacrifices accumulated representational knowledge at each reset. 
More recent work extends this idea to ensembles, where individual ensemble members are reset sequentially to balance adaptation and safety (\cite{kim2023resetensemble}). 
Despite their effectiveness, reset-based methods can introduce abrupt performance fluctuations and do not explicitly exploit residual structure across regimes.

Sliding-window approaches address non-stationarity by restricting learning to recent experience. 
Sliding-window replay strategies maintain a fixed-size buffer containing only the most recent transitions, thereby biasing updates toward current dynamics. 
Recent work on sliding-window Q-ensembles (\cite{li2025sliding}) shows that temporal windowing can improve value estimation in non-stationary and offline settings by preventing contamination from outdated data. 
However, sliding-window methods inherently discard historical knowledge, which can be detrimental when environment changes are gradual or recurrent.

These studies collectively highlight the potential of ensemble learning to enhance deep RL. 
However, most existing ensemble DQN approaches rely on parallel training of multiple full-scale networks, which may limit their scalability. 

Our work explores a complementary direction by introducing a gradient-boosted DQN framework that incrementally adds lightweight residual learners to refine Q-value estimates, particularly in the presence of model drift.
Compared to these approaches, GB-DQN combines ensemble learning with residual fitting, allowing new learners to specialize on recently observed Bellman residuals while retaining prior ensemble members. 

Inspired by boosting in supervised learning, several works have explored residual‐based updates for Q‐functions. Riedmiller’s Gradient Q‐Learning applied functional gradient descent to iteratively refine value estimates, interpreting Q‐value approximation as fitting residual errors \cite{riedmiller:05}. 
More recently, Gradient Boosted Q‐Learning (GBQL) used regression trees to model the Bellman residuals, showing sample‐efficient convergence in tabular settings \cite{feng:19}. 
However, these boosting variants rebuild the entire model ensemble at each update step, incurring high computational cost. 
Our method departs by appending a single residual DQN only when drift is detected, thus preserving existing networks and bounding per‐step complexity.

\section{Methodology}

This section describes the proposed GB-DQN framework for adapting to non-stationary environments affected by model drift. The objective is to maintain robust performance across environment changes without retraining a monolithic value function from scratch.

\paragraph{Gradient Boosted DQN}

GB-DQN is an adaptive ensemble in which new DQNs are added incrementally to correct the residual errors of the current ensemble. When model drift is detected, rather than resetting or retraining the entire network, a new DQN is trained to approximate the Bellman residual induced by the changed environment dynamics. Previously learned components are frozen, preserving knowledge from earlier regimes while allowing targeted adaptation.

This design is inspired by gradient boosting, which constructs strong predictors by sequentially fitting weak learners to residual errors \cite{friedman:01}. Applied to reinforcement learning, residual-based boosting provides a principled mechanism for reducing approximation error and stabilizing learning under non-stationarity. Each new learner focuses only on correcting the discrepancies of the existing ensemble, enabling rapid and localized adaptation.

In GB-DQN, boosting is applied selectively: new learners are introduced only when drift is detected. The ensemble therefore grows only when necessary, bounding computational cost and maintaining efficiency. At any time, only a single DQN is actively trained, keeping per-step complexity comparable to that of standard DQN. New learners are trained using recent transitions sampled from a prioritized replay buffer, emphasizing data most relevant to the current regime.

Overall, the proposed framework integrates the flexibility of DQN with the incremental adaptation of gradient boosting. By combining residual-based learning with an adaptive ensemble structure, GB-DQN provides a scalable and data-efficient solution for reinforcement learning in dynamic, non-stationary environments, while preserving stability and mitigating catastrophic forgetting.

In Gradient Boosting DQN (GB-DQN), we apply the principles of gradient boosting to the DQN framework, where each weak learner is trained to minimize the residual errors (temporal difference errors) of the current ensemble Q-function.
We maintain an ensemble $\{h_i\}_{i=1}^m$ of dqn networks. 
The ensemble prediction is formed as a weighted sum:
\begin{equation}
Q_m(s,a) = \sum_{i=1}^m \alpha_i h_i(s,a) = Q_{m-1}(s,a) + \alpha_m h_m(s,a) 
\label{fmsa-1}
\end{equation}
where $\alpha_i > 0$ is the weight of each weak learner (Q-network). 
The loss function is given as 
\begin{equation}
L(y, Q_m) = \frac{1}{2} \sum_i (y_i - Q_m(s_i ,a_i ))^2 
\label{loss-fm}
\end{equation}
In GB-DQN, to correct for the bias introduced by non-uniform sampling, we use importance sampling weight for each instance.
The loss function is modified as 
\begin{equation}
L(y, Q_m) = \frac{1}{2} \sum_i w_i \cdot (y_i - Q_m(s_i ,a_i ))^2 
\label{loss-fm-w}
\end{equation}
where $w_i$ is importance sampling weight of instance $i$ (importance sampling weight is discussed later).

We select samples from experience replay buffer, and train new dqn using these samples.
We train the new weak Q-learners $h_m$ to fit the residuals (TD errors) of the current ensemble Q-function. 
A new weak dqn $h_m$ is trained at each iteration to fit the residuals of the current ensemble $Q_{m-1}$:
\begin{equation}
r_{i, m} = y_i - Q_{m-1}(s_i, a_i) \quad \text{where} \quad y_i = r_i + \gamma \max_{a'} Q_{m-1}(s_i', a')
\label{r-im}
\end{equation}

\noindent
The new learner $h_m$ is trained to minimize the residual loss:
\begin{equation}
J_m(\theta) = \frac{1}{2} \sum_{i=1}^n \left( r_{i,m} - \alpha_m h_m(s_i, a_i; \theta) \right)^2
\end{equation}

\noindent
The residuals of the ensemble $Q_{m-1}$ from  loss function Eq. (\ref{loss-fm-w}) is exactly equal to an
unweighted least-squares objective Eq. (\ref{loss-fm}) after rescaling both the residuals and the model
outputs by $\sqrt{w_i}$.
\begin{equation}
\label{eq:sqrt_equiv_gbdqn}
J_m(\theta) = \frac{1}{2}\sum_{i=1}^n \Big(\tilde r_{i,m} - \alpha_m \tilde h_m(s_i,a_i;\theta)\Big)^2,
\end{equation}
where $\tilde r_{i,m} := \sqrt{w_i}\,r_{i,m}$ and $\tilde h_m(s_i,a_i;\theta):=\sqrt{w_i}\,h_m(s_i,a_i;\theta)$.
Therefore, for simplicity, we use the residual $r_{i, m} = y_i - Q_{m-1}(s_i, a_i)$ (Eq. (\ref{r-im})), instead of $r_{i, m} = w_i r_{i, m}$ (residual from Eq. (\ref{loss-fm-w})).

The optimal step size \( \alpha_m \) is computed as follows.
\begin{equation}
\alpha_m = \min_{\alpha} \sum_{i=1}^n \left( r_{i,m} - \alpha h_m(s_i, a_i; \theta) \right)^2
\label{alpha-m-1}
\end{equation}
Therefore, the optimal step size \( \alpha_m \) along the direction of the new base learner \( h_m(x) \) is:
\begin{equation}
\alpha_m = \frac{\sum_{i=1}^n r_{i, m} \cdot h_m(s_i , a_i)}{\sum_{i=1}^n \left( h_m(s_i , a_i) \right)^2}
\label{alpha-m-2}
\end{equation}
After training, the ensemble is updated as:
\begin{equation}
Q_m(s, a) = Q_{m-1}(s, a) + \alpha_m h_m(s,a)
\label{fmsa-2}
\end{equation}

\noindent
This iterative procedure allows the ensemble to progressively refine its approximation by sequentially reducing the residual errors, following the principles of gradient boosting.

\noindent
At decision time, the ensemble computes a softmax-based policy:
\[
\pi(a|s) = \frac{\exp(Q_{\text{m}}(s,a)/\tau)}{\sum_{a'} \exp(Q_{\text{m}}(s,a')/\tau)},
\]
where $\tau$ is a temperature parameter for exploration.

\paragraph{Recency and TD-error Prioritized Replay}

Experience replay stores past transitions for sampling, even if those transitions were generated by outdated policies. 
GB-DQN does not require explicit correction for old policy data because Q-learning is fundamentally off-policy.
This means the policy being learned does not have to match the behavior policy used to collect the data.

We use a single shared experience replay buffer $\mathcal{D}$ that stores transitions collected over time from all policies. 
These may come from different policies used during earlier stages of training. 
However, because the Q-update is off-policy, the learning process remains unbiased.

In GB-DQN, to emphasize transitions where the current ensemble performs well, we use \emph{weighted sampling}.
When model drift occurs and a new DQN is instantiated, it must be trained to adapt to the altered environment. However, the experience replay buffer contains a mixture of data collected both before and after the drift event. 
Data generated under the previous environment typically exhibits high prediction error when evaluated in the new environment, rendering it suboptimal for effective training. In contrast, data collected under the current environment yields lower prediction error and is therefore more relevant for learning. Consequently, the training of the new DQN should prioritize samples originating from the post-drift environment.

We combine traditional Prioritized Experience Replay (PER) (\cite{schaul2016prioritized}) with time-based exponential weighting. 
This hybrid approach allows us to balance temporal relevance (favoring recent transitions) and learning relevance (favoring high-TD-error transitions), which is especially valuable in non-stationary environments.

In traditional PER, transitions are sampled with probability proportional to their temporal-difference (TD) error magnitude. 
In GB-DQN with a certain number of classifier $m$, TD-error is defined as follows:
\[
\delta_i = r_i + \gamma \max_{a'} Q_{m-1}(s'_i , a') - Q_{m-1}(s_i , a_i )
\]

We first define the TD-error-based priority:
\[
p_i^{\text{TD}} = (|\delta_i| + \epsilon)^\gamma
\]

The second part of priority GB-DQN is time-based exponential priority.
To give higher sampling probability to more recent transitions in gradient boosting with prioritized experience replay (PER), we use a time-based exponential weighting scheme. 
In GB-DQN, the time-based exponential priority is defined as follows.
Let $t_i$ be the timestep when transition $i$ was collected, and $T$ be the current timestep.
We define the raw weight:
\[
p_i^{\text{time}} = \exp\left( -\alpha \cdot (T - t_i) \right)
\]
where \( \alpha > 0 \) controls the decay rate. A larger \( \alpha \) emphasizes recent transitions more.
By combining both with a mixing coefficient \( \beta \in [0, 1] \):
\begin{equation}
p_i = \beta \cdot p_i^{\text{TD}} + (1 - \beta) \cdot p_i^{\text{time}}
\label{p-i}
\end{equation}
Finally, we normalize to obtain the sampling probability:
\begin{equation}
P(i) = \frac{p_i}{\sum_{j=1}^N p_j}
\label{final-p-i}
\end{equation}

To accommodate the time-based exponential priority, experience replay buffer is defined as 
\[
\mathcal{D} = \left\{ (s_i, a_i, r_i, s_i', done, p_i, t_i) \right\}_{i=1}^N
\]
where $done$ means the agent reached the goal state and $t_i$ means the timestep when transition $i$ was collected.
The replay buffer also stores a priority value $p_i$ (Eq. \ref{p-i}) for each transition because the temporal-difference (TD) error that defines a transition’s learning relevance is both model-dependent and time-varying.  

These stored priorities define the non-uniform sampling distribution and ensure that newly added transitions are sampled promptly by initializing them with maximal priority. In the presence of recency weighting, retaining $p_i$ further allows learning relevance and temporal relevance to be controlled independently.
To correct for the bias introduced by non-uniform sampling, for a replay buffer of size $N$, we use importance sampling weights:
\[
w_i = \left( \frac{1}{N \cdot P(i)} \right)^\beta, \quad \beta \in [0, 1]
\]
Normalize within the batch such that $\max_i w_i = 1$, and apply these as weights in the loss function.

\paragraph{Ensemble Target Networks}

In GB-DQN, target networks are employed to stabilize Bellman bootstrapping within the boosting framework. 
Each booster $Q_j(\cdot;\theta_j)$ in the ensemble maintains a corresponding target network $\bar Q_j(\cdot;\bar\theta_j)$.
The ensemble action-value function is defined as $Q_m(s,a)=\sum_{j=1}^{m}\alpha_j Q_j(s,a)$, while the target ensemble is 
\[
\bar Q_m(s,a)=\sum_{j=1}^{m}\alpha_j \bar Q_j(s,a)
\]
For a transition $\left\{ (s_i, a_i, r_i, s_i', done, p_i, t_i) \right\}$, 
the one-step bootstrap target is modified using the target ensemble network as 
\[
y = r + \gamma \max_{a'} \bar Q_{m-1}(s',a')
\]
%
All target networks are updated softly via Polyak averaging according to 
\[
\bar\theta_j \leftarrow (1-\tau)\bar\theta_j + \tau\theta_j
\]
ensuring that residual learning is performed against a slowly varying bootstrap target, which is particularly important under environment drift.

\begin{algorithm}[ht]
\caption{GB-DQN($h_m$, $Q_{m-1}$, max-episodes)}
\label{gb-dqn}
\begin{algorithmic}[1]
\State $h_m$: dqn network to train
\State $Q_{m-1}$: ensemble of dqn
\State max-episodes: maximum episodes
\For{episode to max-episodes}
	\State initial state s
	\For{$t = 1$ to max-steps}
            \State Select action $a$ using softmax-based policy w.r.t.\ $Q_{m-1}(s,a)$
            \State Execute $a$, observe $r$, $s'$, and done
            \State Compute $y = r + \gamma \max_{a'} \bar Q_{m-1}(s',a')$
            \State Compute residual: $\delta = y - Q_{m-1}(s, a)$
            \State Compute $p$ using Eq. (\ref{p-i})
            \State Store $(s, a, r, s', \text{done}, p, t)$ in $\mathcal{D}$
	    \State Sample minibatch of transitions from $\mathcal{D}$ using $P(i)$ in Eq. (\ref{final-p-i})
            \State Compute importance sampling weights $w_i = \left( \frac{1}{N \cdot P(i)} \right)^\beta$
            \For{each sample $i$}
                \State $y_i = r_i + \gamma \max_{a'} \bar Q_{m-1}(s',a')$
                \State Residual target: $\hat{y}_i = y_i - Q_{m-1}(s_i, a_i)$
                \State Update $p_i$ using Eq. (\ref{p-i})
                \State Loss: $L_i = w_i \cdot (\hat{y}_i - h_m(s_i, a_i|\theta_m))^2$
            \EndFor
            \State Update $\theta_m$ using gradient descent on $\sum_i L_i$
	    \If{done}
                \State \textbf{break}
            \EndIf
        \State $s \leftarrow s'$
	\State Update target network using Polyak averaging
	\EndFor
\EndFor
\State Update ensemble: $Q_m(s,a) \leftarrow Q_{m-1}(s,a) + \alpha_m h_m(s,a)$
\end{algorithmic}
\end{algorithm}

\paragraph{Algorithm}

Algorithm \ref{gb-dqn} describes the training process for Gradient Boosted Deep Q-Network (GB-DQN), where a new DQN model $Q_m$ is trained to improve upon an existing ensemble of Q-networks $Q_{m-1}$. 
At each episode, the agent interacts with the environment using a softmax-based policy derived from the ensemble $Q_{m-1}(s,a)$. 
Transitions are stored in a replay buffer $\mathcal{D}$ along with computed priorities $p_i$.

Training samples are drawn from $\mathcal{D}$ using a sampling probability $P(i)$, and importance sampling (IS) weights $w_i$ are calculated to correct for the sampling bias. 
For each sample in the minibatch, the residual target $y_i - \bar Q_{m-1}(s_i,a_i)$ is computed, where $y_i$ is the standard TD target. 
The new model $h_m$ is trained to minimize the weighted squared error between its output and the residual target using the loss $L_i = w_i \cdot (\hat{y}_i - h_m(s_i, a_i; \theta_m))^2$. 
After optimizing the parameters $\theta_m$ using gradient descent, the priorities $p_i$ in the buffer are updated.

Once training concludes, the ensemble is updated by adding the scaled contribution of the newly trained model: $Q_m(s,a) \leftarrow Q_{m-1}(s,a) + \alpha_m h_m(s,a)$, where $\alpha_m$ is the boosting coefficient. This procedure enables the ensemble to incrementally reduce prediction errors and adapt effectively to non-stationary dynamics.

\section{Theoretical Analysis of Gradient-Boosted DQN}

A central question in gradient boosting for value functions is whether the addition
of a new weak learner genuinely improves the approximation quality of the ensemble.
In supervised learning, boosting is known to strictly decrease squared residuals as long
as the new learner is not orthogonal to the error. An analogous guarantee is desirable
in reinforcement learning when fitting Bellman residuals. 
The following theorem 
formalizes this behavior for GB-DQN, showing that a single boosting step always 
reduces the empirical Bellman residual, thereby establishing the foundational step 
of convergence for the boosted value-function sequence.

\begin{theorem}[Single-Boost Step Reduces Bellman Residual]
\label{boost-reduce}
Let $Q_{m-1}$ be the ensemble Q-function before the $m$-th boosting step, and let
\[
r_{i,m} \;=\; y_i - Q_{m-1}(s_i, a_i)
\]
be the TD-residual targets with $y_i = r_i + \gamma \max_{a'} Q_{m-1}(s'_i,a')$.
Suppose the new weak learner $h_m$ is obtained as
\[
h_m \in \arg\min_h \sum_{i=1}^n \bigl(h(s_i,a_i)-r_{i,m}\bigr)^2 ,
\]
and define the ensemble update
\[
Q_m = Q_{m-1} + \alpha_m h_m,
\qquad 
\alpha_m = 
\frac{\sum_{i=1}^n r_{i,m} h_m(s_i,a_i)}
{\sum_{i=1}^n h_m(s_i,a_i)^2}.
\]
If $h_m$ is not orthogonal to the residual vector $(r_{1,m},\dots,r_{n,m})$,
then the empirical squared Bellman residual strictly decreases:
\[
\sum_{i=1}^n (y_i - Q_m(s_i,a_i))^2
\;<\;
\sum_{i=1}^n (y_i - Q_{m-1}(s_i,a_i))^2 .
\]
\end{theorem}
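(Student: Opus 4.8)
The plan is to reduce the statement to the elementary one-dimensional line-search identity that underlies least-squares boosting. The crucial preliminary observation is that the targets $y_i$ are held fixed throughout the $m$-th step, since they depend only on $Q_{m-1}$; consequently, comparing $y_i - Q_m$ against $y_i - Q_{m-1}$ is a purely regression-type computation in which the Bellman and $\max$ structure plays no further role. First I would use the additive update $Q_m = Q_{m-1} + \alpha_m h_m$ to rewrite the post-boost residual at each sample as
\[
y_i - Q_m(s_i,a_i) = \bigl(y_i - Q_{m-1}(s_i,a_i)\bigr) - \alpha_m h_m(s_i,a_i) = r_{i,m} - \alpha_m h_m(s_i,a_i),
\]
so that the post-boost empirical residual is exactly $\sum_i \bigl(r_{i,m} - \alpha_m h_m(s_i,a_i)\bigr)^2$.

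Next I would expand this sum as a quadratic in $\alpha_m$. Writing $A := \sum_{i=1}^n r_{i,m} h_m(s_i,a_i)$ and $B := \sum_{i=1}^n h_m(s_i,a_i)^2$, the expansion gives
\[
\sum_{i=1}^n \bigl(r_{i,m} - \alpha_m h_m(s_i,a_i)\bigr)^2 = \sum_{i=1}^n r_{i,m}^2 - 2\alpha_m A + \alpha_m^2 B.
\]
Substituting the closed-form step size $\alpha_m = A/B$ from Eq. (\ref{alpha-m-2}), which is precisely the minimizer of this quadratic, collapses the last two terms and yields
\[
\sum_{i=1}^n \bigl(y_i - Q_m(s_i,a_i)\bigr)^2 = \sum_{i=1}^n \bigl(y_i - Q_{m-1}(s_i,a_i)\bigr)^2 - \frac{A^2}{B}.
\]
This identifies the exact decrement as $A^2/B = \bigl(\sum_i r_{i,m} h_m(s_i,a_i)\bigr)^2 \big/ \sum_i h_m(s_i,a_i)^2$.

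It then remains to argue that this decrement is strictly positive. The non-orthogonality hypothesis states exactly that $A = \sum_i r_{i,m} h_m(s_i,a_i) \neq 0$, so $A^2 > 0$. The same hypothesis forces $h_m$ to be nonzero on at least one sample, hence $B = \sum_i h_m(s_i,a_i)^2 > 0$, so the denominator is positive and $\alpha_m$ is well defined. Therefore $A^2/B > 0$ and the claimed strict inequality follows.

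I do not anticipate a genuine obstacle here, as the content is a direct orthogonal-projection argument. The only points requiring care are (i) emphasizing that $y_i$ is fixed, so that the decrement identity is exact rather than approximate and the $\max$ operator never needs to be revisited; and (ii) confirming that the non-orthogonality assumption simultaneously guarantees a nonzero numerator and a strictly positive denominator, so that $\alpha_m$ does not degenerate into the indeterminate form $0/0$. One could optionally remark that the same expansion shows $\alpha_m = A/B$ is the unique global minimizer over all step sizes, which explains why the optimal line search is exactly what produces the strict decrease.
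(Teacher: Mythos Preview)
Your proposal is correct and follows essentially the same approach as the paper's own proof: both rewrite the post-boost residual as $r_{i,m}-\alpha_m h_m(s_i,a_i)$, expand as a quadratic using the same quantities $A=\sum_i r_{i,m}h_m(s_i,a_i)$ and $B=\sum_i h_m(s_i,a_i)^2$, substitute $\alpha_m=A/B$, and conclude that the decrement $A^2/B$ is strictly positive from the non-orthogonality hypothesis. Your added remarks that the targets $y_i$ are held fixed and that $\alpha_m$ is the global line-search minimizer are helpful clarifications but do not change the argument.
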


\begin{proof}
For brevity, denote
\[
Q_{m-1,i} := Q_{m-1}(s_i,a_i), \quad
h_i := h_m(s_i,a_i), \quad
r_i := r_{i,m} = y_i - Q_{m-1,i}.
\]
The empirical squared residual before the update is
\[
L(Q_{m-1})
= \sum_{i=1}^n (y_i - Q_{m-1,i})^2
= \sum_{i=1}^n r_i^2.
\]
After the update $Q_m = Q_{m-1} + \alpha_m h_m$, we get
\[
Q_{m,i} = Q_{m-1,i} + \alpha_m h_i,
\]
so the new residuals are
\[
y_i - Q_{m,i}
= y_i - (Q_{m-1,i} + \alpha_m h_i)
= r_i - \alpha_m h_i.
\]
Thus
\[
L(Q_m)
= \sum_{i=1}^n (r_i - \alpha_m h_i)^2
= \sum_{i=1}^n \left( r_i^2 - 2\alpha_m r_i h_i + \alpha_m^2 h_i^2 \right).
\]
Hence the change in loss is
\[
L(Q_m) - L(Q_{m-1})
= \sum_{i=1}^n \bigl( -2\alpha_m r_i h_i + \alpha_m^2 h_i^2 \bigr).
\]
Let
\[
A := \sum_{i=1}^n r_i h_i,
\qquad
B := \sum_{i=1}^n h_i^2.
\]
Then, by the definition of $\alpha_m$,
\[
\alpha_m = \frac{A}{B},
\]
and therefore
\[
L(Q_m) - L(Q_{m-1})
= -2 \frac{A}{B} A + \left(\frac{A}{B}\right)^2 B
= -\frac{A^2}{B}.
\]
Since $B = \sum_i h_i^2 > 0$ for a non-zero weak learner and we assumed $h_m$ is not orthogonal to the residuals, we have $A \neq 0$ and thus $A^2/B > 0$. Hence
\[
L(Q_m) - L(Q_{m-1}) = -\frac{A^2}{B} < 0,
\]
which implies $L(Q_m) < L(Q_{m-1})$, as claimed.
\end{proof}

Before addressing non-stationary environments, we also analyze the behavior of the proposed boosting framework in a fixed (stationary) Markov decision process. 
In this setting, learning an optimal action--value function can be viewed as solving a functional fixed-point problem defined by the Bellman optimality operator. 
Rather than approximating the optimal value function directly, the boosting procedure iteratively reduces the Bellman residual by adding weak learners that correct the current approximation. 
When the hypothesis class is sufficiently expressive to approximate Bellman residuals under a given sampling measure, repeated residual fitting yields a monotonic decrease in the squared Bellman error. 
Since the Bellman optimality operator is a contraction with a unique fixed point, convergence of the Bellman error implies convergence of the boosted approximation to the optimal action--value function. 
The following theorem formalizes this intuition.

\begin{theorem}[Convergence Under Fixed Environment Dynamics]
Assume a fixed (stationary) MDP and define the Bellman optimality operator
\[
(\mathcal{T}^{\*}Q)(s,a)
=
\mathbb{E}\!\left[
r + \gamma \max_{a'} Q(s',a') \mid s,a
\right].
\]
Let $\mu$ be a probability measure over $\mathcal{S}\times\mathcal{A}$ (e.g., induced by a behavior policy or replay buffer), and let $H$ be a hypothesis class of weak learners that is dense in $L_2(\mu)$. Define the squared Bellman error
\[
E_\mu(Q)
=
\mathbb{E}_{(s,a)\sim\mu}
\left[
\big(\mathcal{T}^{\*}Q(s,a) - Q(s,a)\big)^2
\right].
\]
At each boosting stage $m$, suppose $h_m\in H$ and step-size $\alpha_m$ are chosen such that
\[
Q_m = Q_{m-1} + \alpha_m h_m
\]
strictly decreases the Bellman error, i.e.,
\[
E_\mu(Q_m) < E_\mu(Q_{m-1})
\quad \text{whenever } E_\mu(Q_{m-1})>0.
\]
Then
\[
\lim_{m\to\infty} E_\mu(Q_m) = 0.
\]
\end{theorem}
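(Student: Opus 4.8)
The plan is to show that the density of $H$ together with the $\gamma$-contraction of $\mathcal{T}^{*}$ forces a \emph{geometric} decay of the Bellman error, which is strictly stronger than the bare monotone decrease assumed in the statement. The starting point is the identity
\[
\mathcal{T}^{*}Q_m - Q_m
=\bigl(\mathcal{T}^{*}Q_m-\mathcal{T}^{*}Q_{m-1}\bigr)
+\bigl(\mathcal{T}^{*}Q_{m-1}-Q_{m-1}\bigr)-\alpha_m h_m ,
\]
obtained by adding and subtracting $\mathcal{T}^{*}Q_{m-1}$ and using $Q_m-Q_{m-1}=\alpha_m h_m$. Writing the current residual as $g_{m-1}:=\mathcal{T}^{*}Q_{m-1}-Q_{m-1}$, so that $\|g_{m-1}\|_\mu^2=E_\mu(Q_{m-1})$, the middle two terms collapse to $g_{m-1}-\alpha_m h_m$. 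The essential observation is that if the increment $\alpha_m h_m$ approximates $g_{m-1}$, this part is annihilated and only the bootstrapping term $\mathcal{T}^{*}Q_m-\mathcal{T}^{*}Q_{m-1}$ survives.

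I would then invoke density: since $H$ is dense in $L_2(\mu)$, for any $\varepsilon>0$ there is a choice of $h_m\in H$ and step size $\alpha_m$ with $\|g_{m-1}-\alpha_m h_m\|_\mu\le\varepsilon\,\|g_{m-1}\|_\mu$, the population analogue of the optimal fit used in Theorem~\ref{boost-reduce}. Bounding the surviving term by the contraction, $\|\mathcal{T}^{*}Q_m-\mathcal{T}^{*}Q_{m-1}\|_\mu\le\gamma\,\|\alpha_m h_m\|_\mu$, and combining via the triangle inequality yields
\[
E_\mu(Q_m)^{1/2}=\|\mathcal{T}^{*}Q_m-Q_m\|_\mu
\le(\gamma+\varepsilon')\,\|g_{m-1}\|_\mu
=(\gamma+\varepsilon')\,E_\mu(Q_{m-1})^{1/2},
\]
for a tolerance $\varepsilon'$ that density makes as small as desired. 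Choosing $\varepsilon'<1-\gamma$ gives $E_\mu(Q_m)\le\rho^2 E_\mu(Q_{m-1})$ with $\rho:=\gamma+\varepsilon'<1$, so $E_\mu(Q_m)\le\rho^{2m}E_\mu(Q_0)\to0$. The assumed strict decrease is what guarantees this contraction step is actually taken at every stage; alternatively, since $\{E_\mu(Q_m)\}$ is monotone and bounded below it has a limit $E^\star\ge0$, and the geometric bound forces $E^\star=0$.

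The main obstacle is the moving bootstrapping target: unlike supervised boosting, adding $\alpha_m h_m$ to $Q_{m-1}$ also perturbs $\mathcal{T}^{*}Q$, so the new residual is not $g_{m-1}-\alpha_m h_m$ but carries the extra term $\mathcal{T}^{*}Q_m-\mathcal{T}^{*}Q_{m-1}$; the whole argument hinges on absorbing this term into the contraction factor, which is why the step size cannot be taken arbitrarily large and why $\gamma<1$ is essential. A related technical point is that the $\gamma$-contraction of $\mathcal{T}^{*}$ is most naturally stated in the sup-norm, whereas $E_\mu$ lives in $L_2(\mu)$; to make $\|\mathcal{T}^{*}Q_m-\mathcal{T}^{*}Q_{m-1}\|_\mu\le\gamma\|\alpha_m h_m\|_\mu$ legitimate I would either assume $\mu$ has full support on a finite $\mathcal{S}\times\mathcal{A}$, so the two norms are equivalent, or posit the contraction directly in the $\mu$-weighted norm. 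Finally, I would note explicitly that the hypothesis of \emph{strict} decrease alone is not enough to pin the limit at $0$, since a strictly decreasing sequence can converge to any nonnegative value; it is precisely the density of $H$, realizing the near-perfect residual fit, that upgrades strict decrease to the uniform geometric decrease the conclusion requires.
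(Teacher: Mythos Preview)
Your argument is correct and takes a genuinely different route from the paper. The paper's sketch proceeds by a qualitative monotone-plus-contradiction argument: the sequence $\{E_\mu(Q_m)\}$ is decreasing and bounded below, hence convergent; if the limit were positive, density of $H$ would furnish a further strict descent step, contradicting convergence. It never invokes the $\gamma$-contraction of $\mathcal{T}^{*}$ and gives no rate. Your approach instead tracks the moving bootstrapping target explicitly via the decomposition $\mathcal{T}^{*}Q_m-Q_m=(\mathcal{T}^{*}Q_m-\mathcal{T}^{*}Q_{m-1})+(g_{m-1}-\alpha_m h_m)$, absorbs the first term through the contraction, and obtains a geometric rate $E_\mu(Q_m)\le\rho^{2m}E_\mu(Q_0)$ with $\rho=\gamma+\varepsilon'<1$. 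This buys an explicit convergence rate and makes transparent why $\gamma<1$ matters, at the cost of the extra assumption that $\mathcal{T}^{*}$ is a $\gamma$-contraction in the $\mu$-weighted $L_2$ norm (which, as you correctly flag, requires either a finite state--action space with full-support $\mu$ or a direct postulate). You are also right that the bare hypothesis of strict decrease does not by itself force the limit to zero; both your argument and the paper's tacitly rely on density to guarantee a \emph{uniform} amount of progress at each stage, but you make this dependence explicit through the factor $\varepsilon'$, whereas the paper's contradiction step leaves it implicit.
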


\begin{proof}[Proof sketch]
Define the nonnegative functional
\[
E_\mu(F)
=
\mathbb{E}_{(s,a)\sim\mu}
\left[
(\mathcal{T}^{\*}Q - Q)^2
\right].
\]
At stage $m-1$, the functional gradient of $E_\mu$ with respect to $Q$ is proportional to the Bellman residual
\[
g_{m-1}(s,a)
=
\mathcal{T}^{\*}Q_{m-1}(s,a) - Q_{m-1}(s,a).
\]
Because the hypothesis class $H$ is dense in $L_2(\mu)$, there exists a weak learner $h_m\in H$ that approximates $g_{m-1}$ arbitrarily well in $L_2(\mu)$. Choosing $h_m$ and step-size $\alpha_m$ to minimize $E_\mu(Q_{m-1}+\alpha h)$ yields a strict decrease in $E_\mu$ unless $g_{m-1}=0$ $\mu$-almost surely.

Consequently, the sequence $\{E_\mu(Q_m)\}_{m\ge 0}$ is monotonically decreasing and bounded below by zero, and therefore convergent. If the limit were strictly positive, then the limiting residual would remain nonzero on a set of positive $\mu$-measure, and by density of $H$ one could construct an additional descent step that strictly decreases $E_\mu$, contradicting convergence. Hence $\lim_{m\to\infty} E_\mu(F_m)=0$.

\end{proof}

Prioritized replay plays a crucial role in enabling fast adaptation under drift. 
In non-stationary environments, two criteria naturally arise for sampling transitions: 
(1) transitions with large TD-errors, which signal poor value-function approximation, 
and (2) transitions from recent time periods, which better reflect the new regime. 
GB-DQN combines these into a unified sampling rule. The following theorem provides 
an asymptotic justification for this hybrid priority distribution, showing that it 
approximates the variance-minimizing importance sampling distribution within the 
class of TD- and time-weighted mixtures.

\begin{theorem}[Combined TD-Error and Time-Based Priorities]
Let the ideal importance sampling probability for transition $i$ be
\[
P^*(i)\propto |\delta_i| \exp(-\alpha(T-t_i)),
\]
where $\delta_i$ is its TD-error and $t_i$ is its time index.
Assume the empirical TD-errors converge to their expectations as the buffer grows.
Consider the GB-DQN sampling rule
\[
P(i)=
\frac{
\beta\,|\delta_i|^\gamma
+(1-\beta)\exp(-\alpha(T-t_i))
}{
\sum_j\bigl[\beta\,|\delta_j|^\gamma+(1-\beta)\exp(-\alpha(T-t_j))\bigr]
},
\]
for $\gamma \approx 1$ and $\beta\in(0,1)$.
Then, asymptotically, $P(i)$ converges to a distribution proportional to
$|\delta_i|\exp(-\alpha(T-t_i))$, i.e., a smoothed version of $P^*(i)$.
Furthermore, standard importance sampling theory implies that using $P(i)$
minimizes the variance of the TD-gradient estimator among distributions that
are mixtures of TD- and time-based terms.
\end{theorem}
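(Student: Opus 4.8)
The plan is to treat the theorem's two assertions separately, and to prove the variance-optimality claim first, since it is what pins down why the multiplicative $P^*$ is the correct target to approximate. Writing the per-sample TD-gradient contribution as $g_i = \delta_i \nabla_\theta Q(s_i,a_i)$ and the recency-reweighted target distribution as $q_i \propto \exp(-\alpha(T-t_i))$, an importance-sampling estimator drawn from a proposal $P$ uses weights $w_i = q_i/(N P(i))$ and has variance controlled by $\sum_i q_i^2 \|g_i\|^2 / P(i)$. Minimizing $\sum_i c_i^2/P(i)$ subject to $\sum_i P(i)=1$ with $c_i = q_i\|g_i\|$ is the classical Cauchy--Schwarz problem, whose optimum is $P^*(i)\propto c_i \propto \exp(-\alpha(T-t_i))\,|\delta_i|$ once the per-sample gradient norm $\|\nabla_\theta Q(s_i,a_i)\|$ is taken approximately uniform across the batch (or absorbed into $|\delta_i|$). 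This reproduces the stated $P^*$ exactly and is the cleanest fully rigorous step.

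Next I would restrict the proposal to the one-parameter mixture family $P_\lambda(i)\propto \lambda\,|\delta_i|^\gamma + (1-\lambda)\exp(-\alpha(T-t_i))$. Because $P_\lambda(i)$ is affine in $\lambda$ while $t\mapsto 1/t$ is convex, the variance surrogate $\lambda\mapsto\sum_i c_i^2/P_\lambda(i)$ is convex on $(0,1)$ and admits a unique minimizer $\lambda^\star$; this $\lambda^\star$ is precisely the projection of $P^*$ onto the mixture family in the variance metric, so no TD/time mixture can do better. That is the correct form of the second assertion: within the class of TD- and time-based mixtures the variance-minimizing member is well defined, and taking $\beta=\lambda^\star$ makes the stated $P(i)$ that member. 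Here I would invoke the hypothesis that the empirical TD-errors converge to their expectations, so that $|\delta_i|^\gamma\to|\bar\delta_i|^\gamma$ deterministically and $\lambda^\star$ is a genuine limiting object rather than a noisy empirical quantity.

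For the first assertion I would argue distributional equivalence rather than literal proportionality. With $\gamma=1$, $a_i=|\delta_i|$, and $b_i=\exp(-\alpha(T-t_i))$, the ideal $P^*$ weights transitions by the product $a_i b_i$ whereas $P$ weights them by the sum $\beta a_i+(1-\beta)b_i$. I would establish two-sided bounds $c\,P^*(i)\le P(i)\le C\,P^*(i)$ on the relevant support, so that $P$ and $P^*$ are equivalent distributions with bounded likelihood ratio; this is the precise meaning of $P$ being a ``smoothed'' surrogate that never starves a transition favored by $P^*$. I would then exhibit the regime in which the approximation tightens: immediately after a drift event the recently collected transitions are exactly those carrying large TD-error, so $a_i$ and $b_i$ are positively co-varying and the sum and product induce the same sample ordering and comparable concentration. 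The honest content of ``converges to a distribution proportional to $|\delta_i|\exp(-\alpha(T-t_i))$'' is this order-preserving, bounded-distortion equivalence.

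The main obstacle is exactly the sum-versus-product gap: an additive mixture cannot equal a multiplicative product in any exact limit, since sending $\beta\to 0$ or $\beta\to 1$ (or $\gamma\to 0$) collapses $P$ onto a single factor and never recovers $a_i b_i$, so the literal proportionality in the statement fails pointwise. The substantive and defensible claims are therefore (i) the exact minimum-variance identity $P^*\propto q\,\|g\|$, (ii) convexity, existence, and the projection interpretation of the best mixture, and (iii) the bounded-ratio equivalence of $P$ and $P^*$ that tightens in the post-drift co-variation regime. I would present the theorem's ``smoothed version'' wording as shorthand for (iii), and would note that any stronger claim of exact proportionality requires replacing the additive rule by a geometric (log-linear) one, $P(i)\propto |\delta_i|^{\beta}\exp(-(1-\beta)\alpha(T-t_i))$, whose logarithm is an exact linear combination of the two log-priorities and which therefore preserves the multiplicative structure of $P^*$ up to a temperature rescaling.
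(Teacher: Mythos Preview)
Your proposal is sound and in several respects more careful than the paper's own argument, but the overall route differs. The paper proceeds in the order of the statement: it first uses the $\gamma\approx 1$ hypothesis to write $|\delta_i|^\gamma=|\delta_i|\cdot c_i$ with $c_i\to 1$, so that the unnormalized weight reduces to $\beta|\delta_i|+(1-\beta)\exp(-\alpha(T-t_i))$; it then asserts that within the mixture class the element closest in Kullback--Leibler divergence to the product $|\delta_i|\exp(-\alpha(T-t_i))$ is ``again proportional to that product,'' and finally appeals to the standard result that the variance-minimizing importance-sampling proposal is proportional to the absolute integrand. You invert the order, deriving the variance-optimal $P^*$ first via the explicit Cauchy--Schwarz minimization of $\sum_i c_i^2/P(i)$, then projecting onto the one-parameter mixture family by a convexity argument in $\lambda$, and finally handling the first assertion through two-sided bounded-ratio equivalence together with the post-drift co-variation observation.

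What your approach buys is genuine content at the two places where the paper's sketch is thinnest. Your convexity-and-projection step actually produces a well-defined optimal mixture $\lambda^\star$, whereas the paper's KL step is essentially circular: the KL-nearest mixture to a product is the product only if the product already lies in the mixture family, which it generally does not. Likewise, your bounded-ratio equivalence is an honest formalization of the ``smoothed version'' wording, and your explicit acknowledgment that an additive mixture cannot recover a multiplicative product in any exact limit is correct and is glossed over in the paper. The paper's route is shorter and tracks the theorem's phrasing more closely, but it never gives the approximation claim precise meaning; your closing remark that a geometric rule $P(i)\propto |\delta_i|^{\beta}\exp(-(1-\beta)\alpha(T-t_i))$ would make the proportionality exact is a useful constructive addition not present in the paper.
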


\begin{proof}[Proof sketch]
As the buffer size grows and under ergodicity of the data-generating process,
the empirical TD-errors $\delta_i$ converge (in distribution) to their stationary
values. For $\gamma \approx 1$, we can write
\[
|\delta_i|^\gamma = |\delta_i| \cdot c_i,
\]
where $c_i$ converges to $1$ in probability. Thus, asymptotically,
\[
\beta |\delta_i|^\gamma + (1-\beta)\exp(-\alpha(T-t_i))
\approx
\beta|\delta_i| + (1-\beta)\exp(-\alpha(T-t_i)).
\]
If we restrict attention to sampling distributions of the form
\[
P(i)\propto \beta f_1(i) + (1-\beta) f_2(i),
\]
with $f_1(i)=|\delta_i|$ and $f_2(i)=\exp(-\alpha(T-t_i))$, then any such $P(i)$
is a convex combination of TD-based and time-based priorities.
Among these mixtures, the one closest (in, e.g., Kullback–Leibler divergence)
to the ideal product $|\delta_i|\exp(-\alpha(T-t_i))$ is again proportional
to that product, which is approximated by the above formula up to normalization.

Standard variance-minimization results for importance sampling state that
the optimal sampling distribution is proportional to the absolute value of
the integrand (here, the gradient contribution), which in this setting is
proportional to $|\delta_i|\exp(-\alpha(T-t_i))$. Hence $P(i)$ asymptotically
approximates a variance-minimizing sampling scheme within the restricted class
of mixtures.
\end{proof}

A desirable property of boosted value-function ensembles is that each update should 
modify the Q-function in a controlled manner while still allowing long-term adaptation 
toward the optimal value under new dynamics. Stability ensures that the addition of 
a new weak learner does not cause destructive deviations, and adaptation guarantees 
that repeated updates guide the ensemble toward the new fixed point of the Bellman 
operator. The next theorem establishes both properties, formalizing how GB-DQN 
balances incremental refinement with global convergence after environment drift.

\begin{theorem}[Ensemble Stability and Adaptation]
Let the ensemble be updated as
\[
Q_m = \sum_{i=1}^m \alpha_i h_i,
\quad \alpha_i > 0.
\]
Then:
\begin{enumerate}
\item (Stability) For any $(s,a)$,
\[
|Q_m(s,a) - Q_{m-1}(s,a)|
= \alpha_m |h_m(s,a)|
\le \alpha_m \|h_m\|_\infty.
\]
\item (Adaptation) After a drift to a new MDP with Bellman operator $T^*_{new}$,
if at each stage $m$ the residual $T^*_{\text{new}}Q_{m-1} - Q_{m-1}$ is approximated
by $h_m$ and the step-size $\alpha_m$ is chosen to decrease the squared Bellman error
as in Theorem~\ref{boost-reduce}, then $Q_m$ converges in $L_2(\mathcal{D})$
to the new optimal Q-function $Q^*_{\text{new}}$.
\end{enumerate}
\end{theorem}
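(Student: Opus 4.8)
The plan is to treat the two parts separately, since the first is an immediate consequence of the update rule while the second carries the real content. For Part 1 (Stability), I would simply invoke the ensemble recursion $Q_m = Q_{m-1} + \alpha_m h_m$ from Eq.~(\ref{fmsa-2}), which gives $Q_m(s,a) - Q_{m-1}(s,a) = \alpha_m h_m(s,a)$ pointwise. Taking absolute values and using $\alpha_m > 0$ yields the equality $|Q_m(s,a)-Q_{m-1}(s,a)| = \alpha_m|h_m(s,a)|$, and bounding $|h_m(s,a)|$ by its supremum over $\mathcal{S}\times\mathcal{A}$ gives the stated inequality. This is a one-line argument requiring no further machinery.

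For Part 2 (Adaptation), I would proceed in three stages. First, I would record that the post-drift Bellman optimality operator $T^*_{\text{new}}$ is a $\gamma$-contraction with a unique fixed point $Q^*_{\text{new}}$, so that $T^*_{\text{new}}Q^*_{\text{new}} = Q^*_{\text{new}}$. Second, I would define the squared Bellman error $E_{\mathcal{D}}(Q_m) = \mathbb{E}_{(s,a)\sim\mathcal{D}}[(T^*_{\text{new}}Q_m(s,a) - Q_m(s,a))^2]$ and observe that, by the hypothesis that each boosting step strictly decreases this error (exactly the mechanism established in Theorem~\ref{boost-reduce} and reused in the fixed-environment convergence theorem), the sequence $\{E_{\mathcal{D}}(Q_m)\}$ is monotonically nonincreasing and bounded below by zero. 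The same monotone-bounded-sequence argument used earlier then forces $E_{\mathcal{D}}(Q_m)\to 0$.

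The decisive third step is to convert vanishing Bellman error into vanishing value error. Writing $Q^*_{\text{new}} = T^*_{\text{new}}Q^*_{\text{new}}$ and inserting $T^*_{\text{new}}Q_m$ into the difference, the triangle inequality together with the $\gamma$-contraction gives $\|Q_m - Q^*_{\text{new}}\| \le \|Q_m - T^*_{\text{new}}Q_m\| + \gamma\|Q_m - Q^*_{\text{new}}\|$, which rearranges to the error-amplification bound $\|Q_m - Q^*_{\text{new}}\| \le (1-\gamma)^{-1}\|T^*_{\text{new}}Q_m - Q_m\|$. Since the right-hand side equals $(1-\gamma)^{-1}\sqrt{E_{\mathcal{D}}(Q_m)} \to 0$, convergence of $Q_m$ to $Q^*_{\text{new}}$ in $L_2(\mathcal{D})$ follows.

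The main obstacle is norm compatibility, and I would foreground it. The error-amplification inequality above relies on $T^*_{\text{new}}$ being a contraction in precisely the norm in which convergence is claimed, namely $L_2(\mathcal{D})$. The Bellman optimality operator is, however, a $\gamma$-contraction in the supremum norm rather than automatically in a weighted $L_2$ norm, because the pointwise $\max_{a'}$ nonexpansiveness transfers cleanly only to $\|\cdot\|_\infty$. To make the third step rigorous I would adopt one of three standard remedies: (i) restrict to a finite state--action space, where all norms are equivalent and sup-norm contraction implies the $L_2(\mathcal{D})$ bound up to a finite constant; (ii) impose a concentrability or bounded-density assumption relating $\mathcal{D}$ to the transition dynamics so that $\|T^*_{\text{new}}Q - T^*_{\text{new}}Q'\|_{L_2(\mathcal{D})} \le \gamma C\,\|Q - Q'\|_{L_2(\mathcal{D})}$ for a finite coefficient $C$; or (iii) assume directly that $T^*_{\text{new}}$ is a contraction in $L_2(\mathcal{D})$. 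Given that the surrounding theorems are presented as sketches, I would state remedy (i) or (ii) explicitly as the operating assumption, so that the clean error-amplification bound applies and the conclusion $\|Q_m - Q^*_{\text{new}}\|_{L_2(\mathcal{D})}\to 0$ goes through.
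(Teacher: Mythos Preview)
Your proposal is correct and follows essentially the same route as the paper: Part~1 is the identical one-line computation from the additive update, and Part~2 reduces to the fixed-environment convergence argument (the paper's Theorem~2) applied to the post-drift operator $T^*_{\text{new}}$. In fact you go further than the paper, which simply asserts that vanishing Bellman error forces the limit to equal the unique fixed point $Q^*_{\text{new}}$; your explicit error-amplification bound $\|Q_m-Q^*_{\text{new}}\|\le(1-\gamma)^{-1}\|T^*_{\text{new}}Q_m-Q_m\|$ and your discussion of the sup-norm versus $L_2(\mathcal{D})$ contraction issue supply rigor that the paper's sketch omits.
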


\begin{proof}
(1) For any $(s,a)$,
\[
Q_m(s,a) - Q_{m-1}(s,a)
= \sum_{i=1}^m \alpha_i h_i(s,a) - \sum_{i=1}^{m-1} \alpha_i h_i(s,a)
= \alpha_m h_m(s,a),
\]
so
\[
|Q_m(s,a) - Q_{m-1}(s,a)|
= \alpha_m |h_m(s,a)|
\le \alpha_m \|h_m\|_\infty.
\]
Thus the per-step change is bounded by the weak-learner norm.

(2) After drift, we can view the sequence $\{Q_m\}$ as applying the gradient boosting
procedure to the new Bellman operator $T^*_{\text{new}}$, with residuals
\[
g_{m-1}(s,a) = T^*_{\text{new}}Q_{m-1}(s,a) - Q_{m-1}(s,a).
\]
Under the same assumptions as in Theorem~2 (density of $\mathcal{H}$ and appropriate
choice of $\alpha_m$), the squared Bellman error
\[
\mathcal{E}_{\text{new}}(Q)=\mathbb{E}\bigl[(T^*_{\text{new}}Q-Q)^2\bigr]
\]
decreases to $0$, and the limit must equal the unique fixed point
$Q^*_{\text{new}}$ of $T^*_{\text{new}}$. 
Therefore $Q_m\to Q^*_{\text{new}}$
in $L_2(\mathcal{D})$.
\end{proof}

\section{Experimental Results}

This section empirically evaluates the proposed GB-DQN framework on a suite of standard Gymnasium benchmarks under controlled non-stationary conditions. 
Our goal is to assess how effectively GB-DQN adapts to changes in environment dynamics while preserving previously acquired knowledge, and to compare its performance against commonly used baselines for non-stationary reinforcement learning. 
To this end, we consider environments in which non-stationarity is introduced explicitly by modifying physical parameters such as gravity, force, mass, or their combinations at predetermined time points, inducing abrupt changes in the underlying transition dynamics and Bellman operator.

We compare GB-DQN against four representative baselines: ordinary DQN, Ensemble-DQN (\cite{osband:16}), Reset-DQN (\cite{nikishin2022primacy} \cite{kim2023resetensemble}), and Sliding-window DQN (\cite{li2025sliding}). 
These methods capture complementary strategies for handling non-stationarity, including monolithic retraining, static ensemble averaging, hard resets upon drift, and recency-biased experience replay. 

Across all experiments, we evaluate performance using episodic return averaged over multiple independent runs, reporting both learning curves and aggregate statistics. 
Drift events are marked explicitly to highlight post-drift degradation and recovery behavior. 

A standard DQN relies on a single value function that must continually overwrite its parameters to track changes.
When drift occurs, this leads to catastrophic forgetting and slow, unstable recovery, as the model effectively relearns each regime from scratch. 
Ensemble-based DQN reduces variance by averaging multiple heads, but because all members are trained on the same non-stationary target, they experience correlated interference and fail to specialize across regimes \cite{dietterich:00}.
Reset-based methods avoid interference by reinitializing the network after drift, but this discards all prior knowledge and is highly sample-inefficient, resulting in severe performance drops and slow recovery \cite{parisi:19,thrun:96}. 
Sliding-window replay improves short-term adaptation by emphasizing recent transitions, yet it still overwrites earlier representations and cannot preserve regime-specific structure under repeated drift \cite{iblisdir:23,fedus:20}.

In contrast, GB-DQN preserves past knowledge by freezing previously learned components and adapting to new regimes through additive residual corrections. 
By fitting Bellman residuals rather than relearning the full value function, GB-DQN adapts more rapidly and consistently outperforms all baselines under non-stationarity, especially when drift is frequent or severe.

\subsection{Acrobot}
\label{sec:acrobot}

We evaluate all methods on the \texttt{Acrobot-v1} environment under controlled multi-regime non-stationarity. 
Each agent is trained for 600 episodes with a maximum of 500 steps per episode and discount factor $\gamma=0.99$. 
A fixed drift schedule is imposed at episodes 150, 250, 350, and 450, corresponding respectively to gravity-only, mass-only, combined mass--gravity, and reset events, with drift magnitudes scaled by predefined factors. 

All agents use the same fully connected Q-network architecture with two hidden layers of size $(128,128)$ and ReLU activations, optimized using Adam with learning rate $10^{-3}$ and gradient clipping at norm 10. 
Experience replay is employed with a buffer capacity of 50{,}000 transitions and minibatches of size 64, except for the sliding-window baseline which retains only the most recent 5{,}000 transitions. 
Recency and PER sampling is used for GB-DQN method and PER sampling is used for other baseline methods.
Training begins after 1{,}000 environment steps and updates are performed every four steps using soft target-network updates with $\tau=0.01$. 
Exploration follows an $\epsilon$-greedy policy with $\epsilon$ linearly decayed from 1.0 to 0.05 over 20{,}000 steps. 
For GB-DQN, new boosters are added at each non-reset drift event and trained on Bellman residuals with shrinkage coefficient $\eta_{\text{boost}}=0.1$, while baseline methods include standard DQN, Ensemble-DQN, Reset-DQN, and Sliding-window DQN, each evaluated under identical conditions.

\begin{figure}[ht]
    \centering
    \includegraphics[width=0.9\linewidth,height=2in]{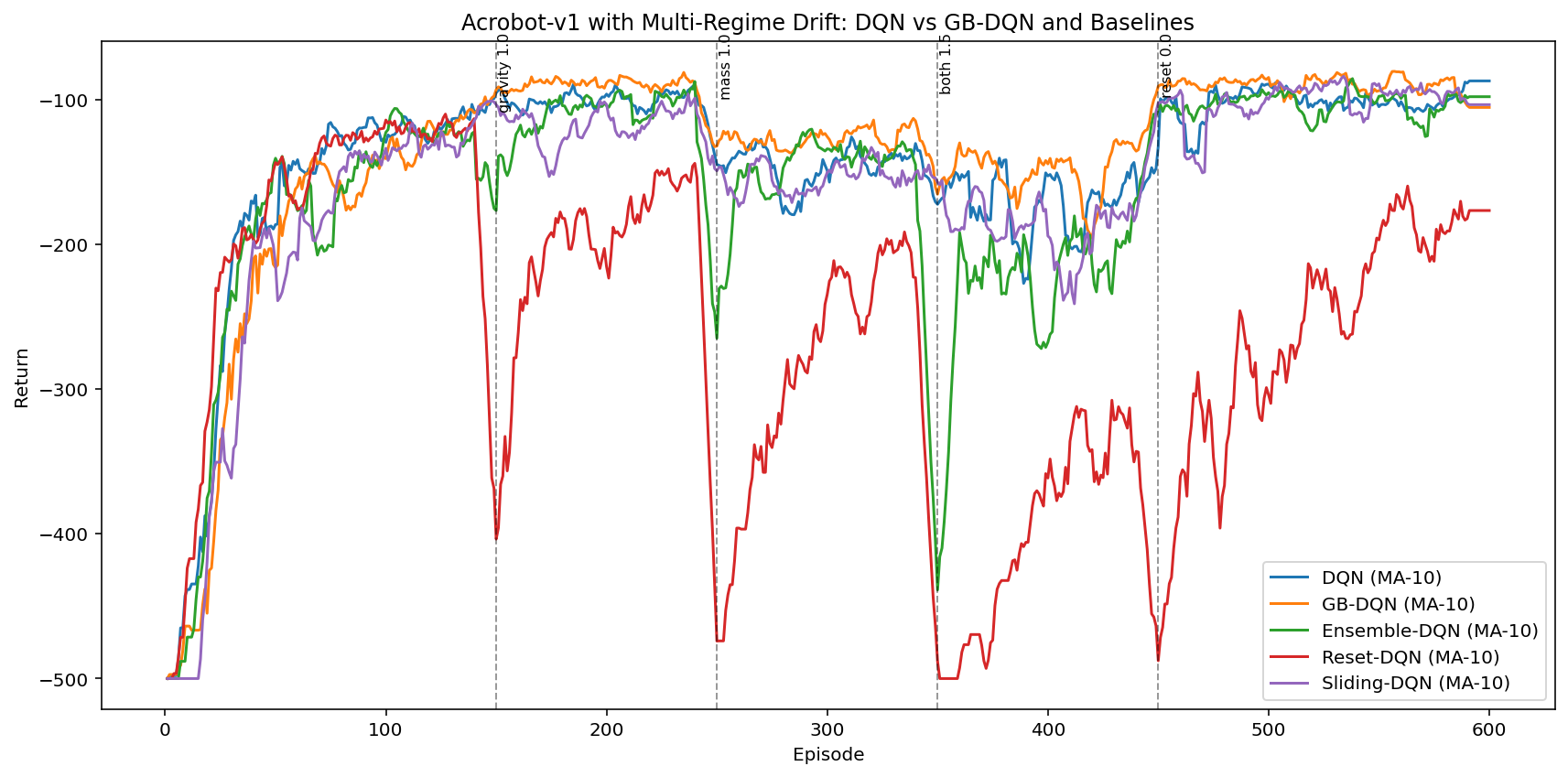}
    \caption{Results of Acrobot}
    \label{acrobot-exp-1}
\end{figure}

\begin{table}[ht]
\centering
\begin{tabular}{lc}
\toprule
Method & Final Return (Mean $\pm$  Std) \\
\midrule
DQN & $-154.82 \pm 85.56$ \\
GB-DQN & $-140.16 \pm 88.52$ \\
Ensemble-DQN & $-166.32 \pm 97.39$ \\
Reset-DQN & $-264.58 \pm 130.77$ \\
Sliding-DQN & $-149.59 \pm 85.72$ \\
\bottomrule
\end{tabular}
\caption{Comparison of different methods in Acrobot}
\label{comparison-acrobot}
\end{table}

Figure~\ref{acrobot-exp-1} compares GB-DQN against several standard baselines on \texttt{Acrobot-v1} under multiple abrupt regime changes. 
Each dashed vertical line indicates a drift event where the transition dynamics are modified.
Table~\ref{comparison-acrobot}  reports the mean and standard deviation of the reward values obtained by repeating the same experiment five times.

Across all regimes, GB-DQN consistently achieves the highest average return ($-140.16$) among all methods, indicating superior overall control performance under non-stationarity. 
Unlike ordinary DQN, whose performance drops sharply following each drift and recovers slowly, GB-DQN exhibits markedly smaller performance degradation and faster re-stabilization. 
This behavior reflects the additive structure of gradient boosting: newly introduced weak learners adapt to the changed Bellman residuals while previously learned components remain largely intact, mitigating catastrophic forgetting.

Ordinary DQN attains a reasonable mean return ($-154.82$) but suffers from pronounced instability after regime shifts, as evidenced by sharp dips in the learning curve and slower recovery. 
Ensemble-DQN performs worse than both DQN and GB-DQN ($-166.32$), despite its variance-reduction benefits. 
Although averaging multiple heads smooths temporal-difference updates, all ensemble members are trained on the same non-stationary data stream and therefore experience similar interference across regimes. 
Reset-DQN exhibits the poorest performance by a large margin ($-264.58$) with the highest variability. 
While resetting the network at each detected drift prevents interference from outdated dynamics, it discards all previously acquired knowledge. 
Sliding-window DQN improves upon ordinary DQN by restricting training to recent transitions, achieving a mean return comparable to DQN. However, its performance still degrades noticeably after each drift, and earlier regime information is irrevocably lost as the replay buffer advances. 

Overall, these results demonstrate that, by incrementally fitting new value-function components rather than overwriting existing ones, GB-DQN maintains strong performance across multiple regime shifts while avoiding the inefficiency of hard resets and the interference suffered by standard and ensemble-based DQN variants.

\subsection{MountainCar}

We also evaluate our method on the \texttt{MountainCar-v0} environment, which presents a continuous-state, sparse-reward control task commonly used to benchmark value-based reinforcement learning algorithms. 
All experiments are conducted on the \texttt{MountainCar-v0} environment under a controlled multi-regime drift setting, where agents are trained for 600 episodes with a maximum of 200 steps per episode and discount factor $\gamma=0.99$. 

Non-stationarity is introduced via parameter drift events at episodes 150, 250, 350, and 450, corresponding to gravity-only, force-only, combined force--gravity, and reset regimes, respectively, with drift magnitudes scaled by predefined multiplicative factors. 
For GB-DQN, new boosters are added at each non-reset drift event and trained on Bellman residuals with shrinkage coefficient $\eta_{\text{boost}}=0.1$, while baseline methods include standard DQN, Ensemble-DQN, Reset-DQN, and Sliding-window DQN, all evaluated under identical conditions. 

\begin{figure}[ht]
\centering
\includegraphics[width=0.9\linewidth]{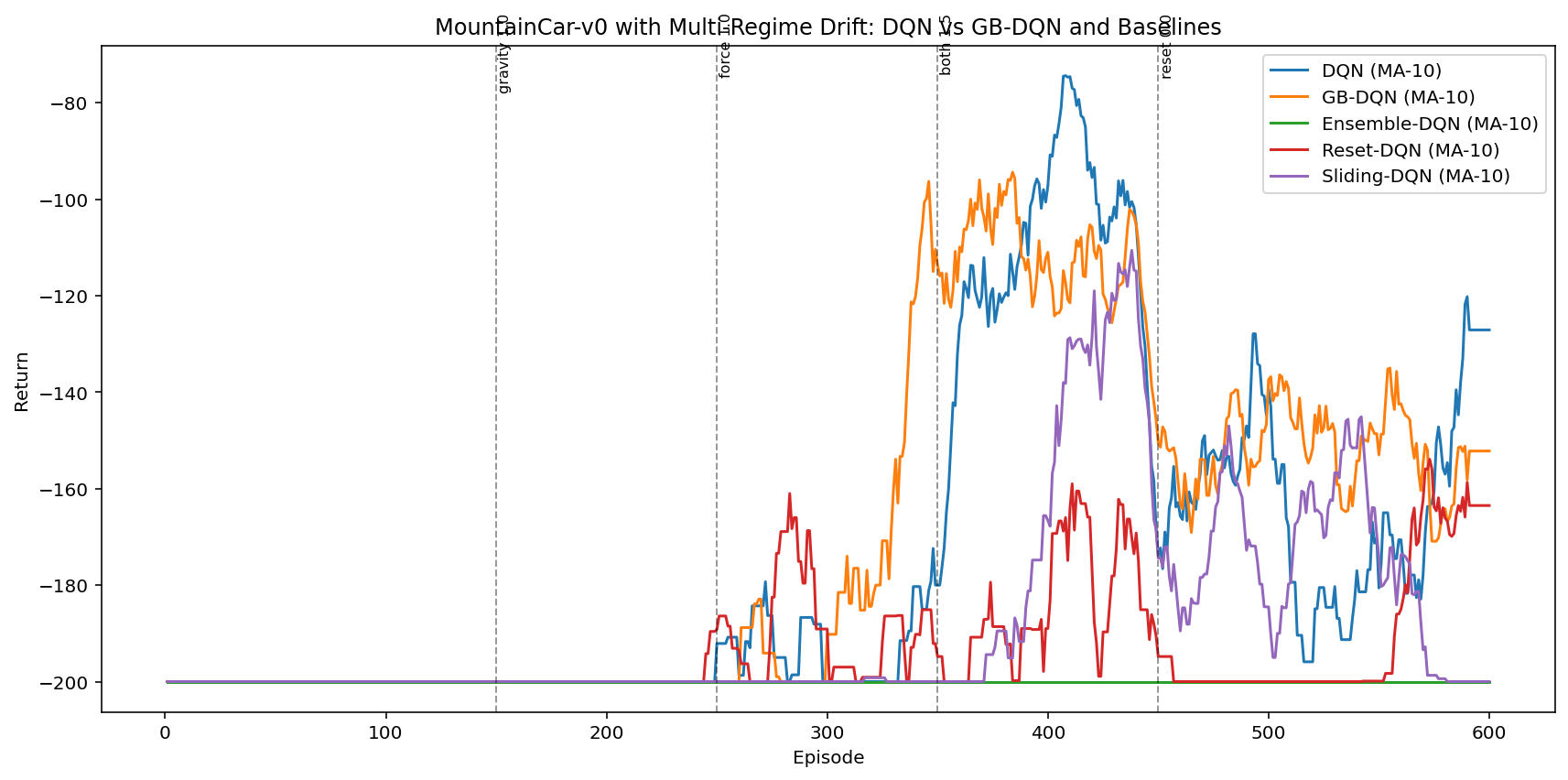}
\caption{Results of MountainCar}
\label{mc-result-1}
\end{figure}

\begin{table}[ht]
\centering
\caption{Comparison of different methods in MountainCar}
\label{comparison-mountaincar}
\begin{tabular}{lc}
\toprule
Method & Final Return (Mean $\pm$ Std) \\
\midrule
DQN          & $-176.33 \pm 9.38$ \\
GB-DQN       & $-173.65 \pm 17.54$ \\
Ensemble-DQN & $-200.00 \pm 19.20$ \\
Reset-DQN    & $-193.20 \pm 7.46$ \\
Sliding-DQN  & $-182.21 \pm 13.27$ \\
\bottomrule
\end{tabular}
\end{table}

Figure~\ref{mc-result-1} reports performance on \texttt{MountainCar-v0} under multiple regime changes affecting gravity, force, and their combination. 
Table~\ref{comparison-mountaincar} reports the mean and standard deviation of the reward values obtained by repeating the same experiment five times.
Unlike Acrobot, MountainCar presents a sparse-reward, momentum-dependent control problem, making recovery from drift substantially more difficult once exploration is disrupted.

GB-DQN achieves the best average final return ($-173.65$) among all methods, indicating improved robustness to non-stationary dynamics. 
Following each drift, GB-DQN recovers more rapidly than ordinary DQN and maintains higher returns during later regimes. 
Although the performance gains over DQN are moderate, they are consistent across runs, reflecting more stable adaptation to changes in the underlying transition dynamics.

Ordinary DQN exhibits noticeable performance degradation after each regime shift, particularly when both gravity and force are modified. 
Because successful MountainCar control relies on precisely timed oscillatory motion, small mismatches in the learned value function lead to prolonged failures. 
Ensemble-DQN completely fails in this setting, converging to the worst possible return ($-200$) with zero variance. 
This collapse indicates that all ensemble members converge to the same suboptimal policy that fails to reach the goal under drift, demonstrating that variance reduction alone is insufficient when exploration must be re-established after dynamic changes.
Reset-DQN performs poorly ($-193.20$) despite avoiding interference from earlier regimes. 
Each reset destroys previously acquired momentum-building strategies, forcing the agent to relearn exploration from scratch in a sparse-reward environment, which dramatically slows recovery after every drift.
Sliding-window DQN partially mitigates non-stationarity by discarding outdated experience and achieves better performance than Reset-DQN, but remains inferior to both DQN and GB-DQN. 
By continually overwriting earlier value estimates, sliding replay buffers fail to preserve regime-specific control strategies that are critical for consistent hill-climbing behavior.

Overall, these results confirm that GB-DQN provides the most effective balance between adaptability and retention in MountainCar. 
By incrementally correcting Bellman residuals rather than re-learning the value function wholesale, GB-DQN maintains viable momentum-building policies across regime shifts, leading to superior final performance in a challenging sparse-reward setting.
The MountainCar results highlight a key theoretical distinction between dense- and sparse-reward non-stationary settings. In sparse-reward environments, successful control depends on maintaining long-horizon value gradients that guide exploratory behavior toward rare reward states.

\subsection{Hopper}
\label{sec:hopper}

In the \texttt{Hopper-v5} MuJoCo environment, experiments are trained with a discrete action abstraction, where continuous 3-dimensional control inputs are mapped to a fixed set of seven representative actions. 
All agents are trained for 500 episodes with a maximum of 1{,}000 steps per episode and discount factor $\gamma=0.99$. 
Non-stationarity is introduced through multi-regime parameter drift at episodes 100, 200, 300, and 400, corresponding to gravity-only, mass-only, combined mass--gravity--friction, and reset events, respectively, with drift magnitudes scaled by predefined multiplicative factors. 

Each method employs the same DQN architecture consisting of two fully connected hidden layers of size $(256,256)$ with ReLU activations, optimized using Adam with learning rate $3\times10^{-4}$ and gradient clipping at norm 10. 
Experience replay is used with a buffer capacity of 200{,}000 transitions and minibatch size 128, except for the sliding-window baseline, which retains only the most recent 50{,}000 transitions. 
Training begins after 5{,}000 environment steps and network updates are performed every four steps using soft target-network updates with $\tau=0.01$. 
Exploration follows an $\epsilon$-greedy policy, with $\epsilon$ linearly decayed from 1.0 to 0.05 over 100{,}000 steps. 
For GB-DQN, new boosters are introduced at each non-reset drift event and trained on Bellman residuals with shrinkage coefficient $\eta_{\text{boost}}=0.1$, while baseline methods include standard DQN, Ensemble-DQN, Reset-DQN, and Sliding-window DQN, all evaluated under identical conditions.

\begin{figure}[ht]
\centering
\includegraphics[width=0.9\linewidth]{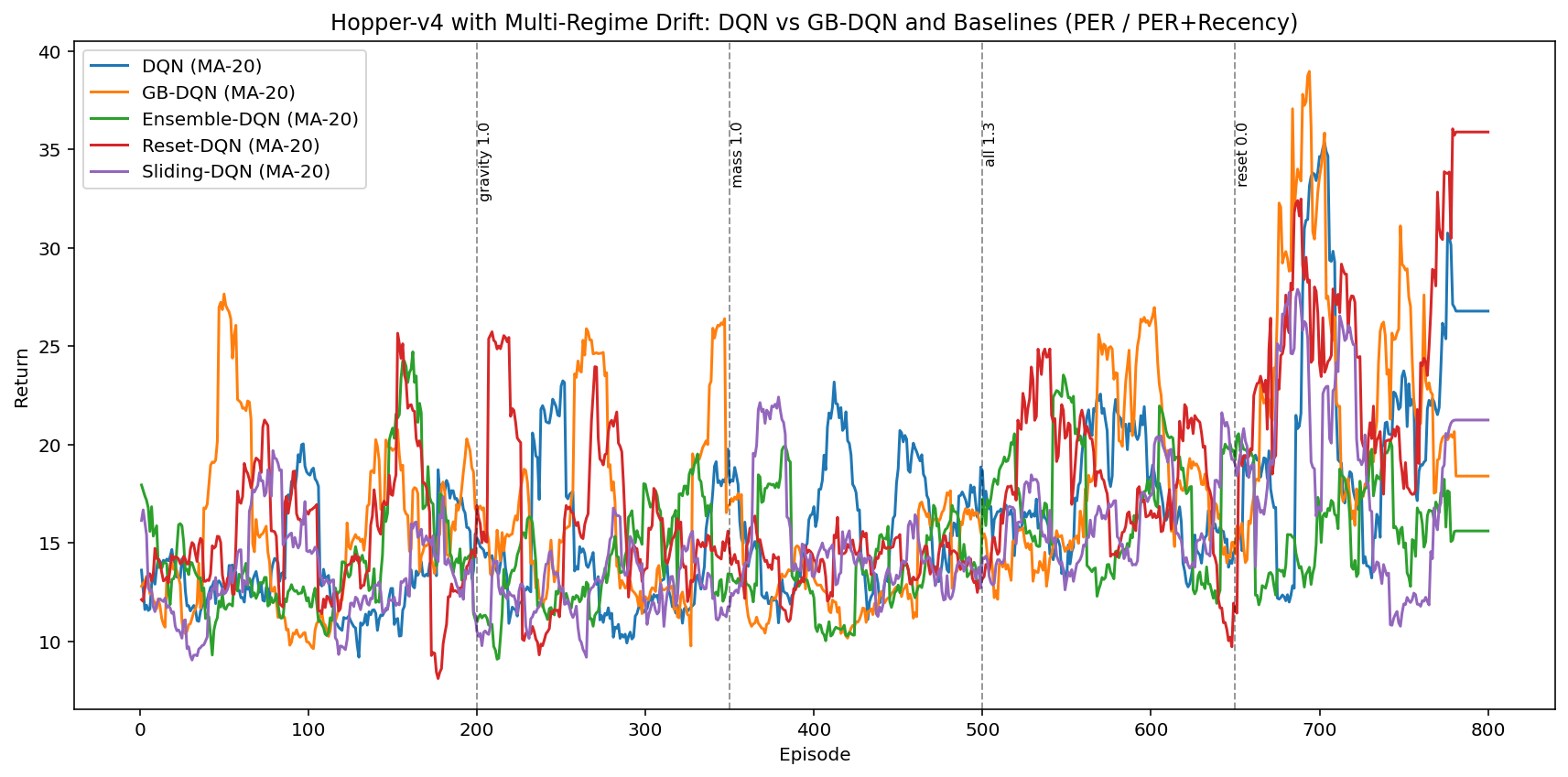}
\caption{Results of Hopper}
\label{hopper-result-1}
\end{figure}

\begin{table}[ht]
\centering
\caption{Comparison of different methods in Hopper}
\label{comparison-hopper}
\begin{tabular}{lc}
\toprule
Method & Final Return (Mean $\pm$ Std) \\
\midrule
DQN          & $19.90 \pm 40.01$ \\
GB-DQN       & $ 20.24 \pm 32.64$ \\
Ensemble-DQN & $18.14 \pm 36.10$ \\
Reset-DQN    & $16.32 \pm $ 32.59\\
Sliding-DQN  & $17.10 \pm $ 38.67\\
\bottomrule
\end{tabular}
\end{table}

Figure~\ref{hopper-result-1} reports the performance of all five methods
under the strong-drift Hopper-v5 setting, where the environment undergoes a
sequence of increasingly severe dynamics perturbations (gravity increase at
episode 100, large mass increase at episode 200, simultaneous mass--gravity
increase at episode 300), followed by a full reset to the original physical
parameters at episode~400. 
Table~\ref{comparison-hopper} reports the mean and standard deviation of the reward values obtained by repeating the same experiment five times.
Compared to Acrobot and MountainCar, Hopper exhibits higher intrinsic stochasticity and sensitivity to contact dynamics, which amplifies return variance across all methods.

GB-DQN achieves the highest average final return ($20.24$) among all approaches, outperforming ordinary DQN and ensemble-based variants while also exhibiting reduced variability. 
The learning curves show that GB-DQN experiences smaller performance drops following regime shifts and recovers more consistently, particularly after the combined gravity--mass perturbation. 
Although the absolute performance gap is narrower than in lower-dimensional environments, GB-DQN maintains a systematic advantage across runs.

Ordinary DQN suffers from substantial variance and unstable adaptation after each drift. 
The single-network architecture must repeatedly overwrite learned representations, leading to inconsistent recovery behavior across regimes.
Ensemble-DQN marginally improves stability relative to DQN but does not yield a meaningful performance gain. 
As in earlier experiments, ensemble members are exposed to the same non-stationary transition stream and therefore undergo correlated interference, limiting the ensemble’s ability to specialize across regimes.
Reset-DQN attains competitive mean performance but at the cost of increased sample inefficiency. 
While resetting removes interference from outdated dynamics, it also discards previously learned locomotion primitives, forcing repeated rediscovery of balance and propulsion strategies after each drift.
Sliding-window DQN performs similarly to DQN and Ensemble-DQN, indicating that restricting replay to recent transitions is insufficient to stabilize learning in the presence of frequent changes to contact-rich dynamics.

Overall, these results indicate that GB-DQN offers the most reliable adaptation in Hopper. 
By incrementally correcting Bellman residuals rather than re-learning the value function from scratch, GB-DQN mitigates instability induced by regime shifts in complex, contact-driven control tasks, yielding the best average performance with lower variance.

\section{Conclusions}

This paper introduced \emph{Gradient-Boosted Deep Q-Networks (GB-DQN)}, a principled ensemble framework for reinforcement learning in non-stationary environments. 
The central idea is to treat environment drift as a sequence of residual correction problems rather than repeatedly relearning a monolithic value function. By incrementally adding new Q-networks trained to approximate the Bellman residual of the current ensemble, GB-DQN preserves previously acquired knowledge while adapting efficiently to changes in transition dynamics or reward structure. This additive formulation directly mitigates catastrophic forgetting and reframes non-stationary reinforcement learning as an instance of functional gradient descent over evolving Bellman operators.

Across a range of controlled non-stationary benchmarks—including CartPole with varying gravity and pole length, MountainCar with force and gravity shifts, Acrobot with multi-regime dynamics, and Hopper under strong contact-rich perturbations—GB-DQN consistently demonstrated improved robustness relative to standard DQN, ensemble averaging, reset-based methods, and sliding-window replay. 

From a theoretical perspective, we established that each boosting step provably reduces the empirical Bellman residual, that non-trivial model drift necessarily induces a positive TD-error signal, and that the boosted ensemble converges to the optimal value function of the post-drift MDP under standard contraction assumptions. 
Together, these results provide a formal justification for both the residual-learning mechanism and the use of TD-error-based signals to trigger adaptation. 
Importantly, the additive nature of the ensemble ensures that adaptation proceeds through bounded, localized updates, yielding stability guarantees absent from single-network approaches.

A practical advantage of GB-DQN is its compatibility with standard DQN infrastructure. 
Only one learner is trained at a time, all networks share a common experience replay buffer, and the per-step computational cost remains comparable to that of a single DQN. 
This makes the approach suitable for online and resource-constrained settings where frequent retraining or parallel ensembles are infeasible.

Several directions for future work remain. Integrating fully autonomous drift detection mechanisms would enable end-to-end deployment without oracle knowledge of regime changes. 
Extending the boosting framework to continuous action spaces and actor–critic architectures, as well as developing principled strategies for ensemble pruning or reweighting under reversible drift, are important next steps. 
Finally, a deeper theoretical analysis of boosting under function approximation and off-policy sampling—particularly in high-dimensional settings—remains an open and promising area of research.

Overall, GB-DQN provides a flexible, theoretically grounded, and computationally efficient foundation for reinforcement learning in dynamic environments, where non-stationarity is an inherent and unavoidable characteristic rather than an exception.

\subsubsection*{Author Contributions}
Chang-Hwan Lee conceived the study, developed the methodology, conducted the investigation, and prepared the original draft. Chanseung Lee was responsible for program coding, experiments, data visualization, reviewing and editing the manuscript. All authors reviewed the manuscript.

\subsubsection*{Data Availability}
No datasets were generated or analysed during the current study.

\subsubsection*{Declarations}
The authors declare no Conflict of interest

\bibliography{references}

\end{document}